 %%%% ijcai22-multiauthor.tex

\typeout{IJCAI--22 Multiple authors example}

% These are the instructions for authors for IJCAI-22.

\documentclass{article}
\pdfpagewidth=8.5in
\pdfpageheight=11in
% The file ijcai22.sty is NOT the same as previous years'
\usepackage{ijcai22}

% Use the postscript times font!
\usepackage{times}

\usepackage{soul}
\usepackage{url}
\usepackage[hidelinks]{hyperref}
\usepackage[utf8]{inputenc}
\usepackage[small]{caption}
\usepackage{graphicx}
\usepackage{amsmath}
\usepackage{booktabs}
\usepackage{amsthm}

%% EXTRA PACKAGES %%%
\usepackage{xcolor}
\usepackage{amssymb}
\usepackage{amsfonts}
\usepackage{bm}
\usepackage{multicol}
\usepackage{multirow}
\usepackage{xifthen}
\usepackage{siunitx}
\usepackage{xspace}
\newcommand{\nc}{\newcommand}

%%%%%%%%%%%%%%%%%%%%%%%%%%%%%%%%%%%%%%%%%
%      CAL-STUFF
%%%%%%%%%%%%%%%%%%%%%%%%%%%%%%%%%%%%%%%%%

\nc{\calDelta}{\mbox{$\cal{\Delta}$}}

\nc{\calA}{{\ensuremath{\cal{A}}}} \nc{\calB}{{\ensuremath{\cal{B}}}}
\nc{\calC}{{\ensuremath{\cal{C}}}} \nc{\calD}{{\ensuremath{\cal{D}}}}
\nc{\calE}{{\ensuremath{\cal{E}}}} \nc{\calF}{{\ensuremath{\cal{F}}}}
\nc{\calG}{{\ensuremath{\cal{G}}}} \nc{\calH}{{\ensuremath{\cal{H}}}}
\nc{\calI}{{\ensuremath{\cal{I}}}} \nc{\calJ}{{\ensuremath{\cal{J}}}}
\nc{\calK}{{\ensuremath{\cal{K}}}} \nc{\calL}{{\ensuremath{\cal{L}}}}
\nc{\calM}{{\ensuremath{\cal{M}}}} \nc{\calN}{{\ensuremath{\cal{N}}}}
\nc{\calO}{{\ensuremath{\cal{O}}}} \nc{\calP}{{\ensuremath{\cal{P}}}}
\nc{\calQ}{{\ensuremath{\cal{Q}}}} \nc{\calR}{{\ensuremath{\cal{R}}}}
\nc{\calS}{{\ensuremath{\cal{S}}}} \nc{\calT}{{\ensuremath{\cal{T}}}}
\nc{\calU}{{\ensuremath{\cal{U}}}} \nc{\calV}{{\ensuremath{\cal{V}}}}
\nc{\calW}{{\ensuremath{\cal{W}}}} \nc{\calX}{{\ensuremath{\cal{X}}}}
\nc{\calY}{{\ensuremath{\cal{Y}}}} \nc{\calZ}{{\ensuremath{\cal{Z}}}}
\nc{\calz}{{\ensuremath{\cal{z}}}}

%%%%%%%%%%%%%%%%%%%%%%%%%%%%%%%%%%%%%%%%%
% BOLDMATH stuff
%%%%%%%%%%%%%%%%%%%%%%%%%%%%%%%%%%%%%%%%%

% \nc{\bm}{\boldmath}
% \nc{\bm}{\bf}
\nc{\bmnull}{{\ensuremath{\boldsymbol{0}}}} 
\nc{\bma}{{\ensuremath{\boldsymbol{a}}}} 
\nc{\bmb}{{\ensuremath{\boldsymbol{b}}}}
\nc{\bmc}{{\ensuremath{\boldsymbol{c}}}} \nc{\bmd}{{\ensuremath{\boldsymbol{d}}}}
\nc{\bme}{{\ensuremath{\boldsymbol{e}}}} \nc{\bmf}{{\ensuremath{\boldsymbol{f}}}}
\nc{\bmg}{{\ensuremath{\boldsymbol{g}}}} \nc{\bmh}{{\ensuremath{\boldsymbol{h}}}}
\nc{\bmi}{{\ensuremath{\boldsymbol{i}}}} \nc{\bmj}{{\ensuremath{\boldsymbol{j}}}}
\nc{\bmk}{{\ensuremath{\boldsymbol{k}}}} \nc{\bml}{{\ensuremath{\boldsymbol{l}}}}
\nc{\bmm}{{\ensuremath{\boldsymbol{m}}}} \nc{\bmn}{{\ensuremath{\boldsymbol{n}}}}
\nc{\bmo}{{\ensuremath{\boldsymbol{o}}}} \nc{\bmp}{{\ensuremath{\boldsymbol{p}}}}
\nc{\bmq}{{\ensuremath{\boldsymbol{q}}}} \nc{\bmr}{{\ensuremath{\boldsymbol{r}}}}
\nc{\bms}{{\ensuremath{\boldsymbol{s}}}} \nc{\bmt}{{\ensuremath{\boldsymbol{t}}}}
\nc{\bmu}{{\ensuremath{\boldsymbol{u}}}} \nc{\bmv}{{\ensuremath{\boldsymbol{v}}}}
\nc{\bmw}{{\ensuremath{\boldsymbol{w}}}} \nc{\bmx}{{\ensuremath{\boldsymbol{x}}}}
\nc{\bmy}{{\ensuremath{\boldsymbol{y}}}} \nc{\bmz}{{\ensuremath{\boldsymbol{z}}}}

\nc{\bmUpsilon}{{\mbox{\bm $\Upsilon$}}}
\nc{\bmupsilon}{{\mbox{\bm $\upsilon$}}}
\nc{\bmalpha}{{\mbox{\bm $\alpha$}}}
\nc{\bmbeta}{{\mbox{\bm $\beta$}}}
\nc{\bmgamma}{{\mbox{\bm $\gamma$}}}
\nc{\bmGamma}{{\mbox{\bm $\Gamma$}}}
\nc{\bmdelta}{{\mbox{\bm $\delta$}}}
\nc{\bmDelta}{{\mbox{\bm $\Delta$}}}
\nc{\bmeps}{{\mbox{\bm $\epsilon$}}}
\nc{\bmepsilon}{\bmeps}
\nc{\bmphi}{{\mbox{\bm $\phi$}}}
\nc{\bmPhi}{{\mbox{\bm $\Phi$}}}
\nc{\bmlambda}{{\mbox{\bm $\lambda$}}}
\nc{\bmLambda}{{\mbox{\bm $\Lambda$}}}
\nc{\bmmu}{{\mbox{\bm $\mu$}}}
\nc{\bmnu}{{\mbox{\bm $\nu$}}}
\nc{\bmpi}{{\mbox{\bm $\pi$}}}
\nc{\bmrho}{{\mbox{\bm$\rho$}}}
\nc{\bmpsi}{{\mbox{\bm $\psi$}}}
\nc{\bmPsi}{{\mbox{\bm $\Psi$}}}
\nc{\bmsigma}{{\mbox{\bm $\sigma$}}}
\nc{\bmSigma}{{\mbox{\bm $\Sigma$}}}
\nc{\bmtheta}{{\ensuremath{\boldsymbol{\theta}}}}
\nc{\bmTheta}{{\mbox{\bm $\Theta$}}}
\nc{\bmeta}{{\mbox{\bm $\eta$}}}
\nc{\bmzeta}{{\mbox{\bm $\zeta$}}}
\nc{\bmxi}{{\mbox{\bm $\xi$}}} \nc{\bmXi}{{\mbox{\bm $\Xi$}}}
\nc{\bmnabla}{{\mbox{\bm $\nabla$}}}
\nc{\bmomega}{{\mbox{\bm $\omega$}}}
\nc{\bmOmega}{{\mbox{\bm $\Omega$}}}

%%%%%%%%%%%%%%%%%%%%%%%%%%%%%%%%%%%%%%%%%
% Other stuff
%%%%%%%%%%%%%%%%%%%%%%%%%%%%%%%%%%%%%%%%%

\nc{\given}{|}
\newcommand{\T}{^{\mbox{\tiny{T}}}}

%%%%%%%%%%%%%%%%%%%%%%%%%%%%%%%%%%%%%%%%%
% Notation
%%%%%%%%%%%%%%%%%%%%%%%%%%%%%%%%%%%%%%%%%
% \newcommand{\framework}{IGN}
\newcommand{\FW}{IGN\xspace}
\newcommand{\FWs}{IGNs\xspace}

\newtheorem{prop}{Proposition}
%% END EXTRA PACKAGES %%%

\urlstyle{same}

% the following package is optional:
%\usepackage{latexsym}

% Following comment is from ijcai97-submit.tex:
% The preparation of these files was supported by Schlumberger Palo Alto
% Research, AT\&T Bell Laboratories, and Morgan Kaufmann Publishers.
% Shirley Jowell, of Morgan Kaufmann Publishers, and Peter F.
% Patel-Schneider, of AT\&T Bell Laboratories collaborated on their
% preparation.

% These instructions can be modified and used in other conferences as long
% as credit to the authors and supporting agencies is retained, this notice
% is not changed, and further modification or reuse is not restricted.
% Neither Shirley Jowell nor Peter F. Patel-Schneider can be listed as
% contacts for providing assistance without their prior permission.

% To use for other conferences, change references to files and the
% conference appropriate and use other authors, contacts, publishers, and
% organizations.
% Also change the deadline and address for returning papers and the length and
% page charge instructions.
% Put where the files are available in the appropriate places.

%PDF Info Is REQUIRED.
% Please **do not** include Title and Author information
\pdfinfo{
/TemplateVersion (IJCAI.2022.0)
}

%%%%DELETE BEFORE SUBMISSION %%%
\definecolor{dimgray}{rgb}{0.0, 0.0, 0.0}
\definecolor{darkgreen}{rgb}{0.0, 0.545, 0.0}
\newcommand{\alessandro}[1]{%
{\color{darkgreen}#1}%
}

\pagestyle{plain}
\pagenumbering{arabic}

%%%% END DELETE%%%%

%%% EXTRA COMMANDS %%%
\newcommand{\frmt}[1]{%
  \num[round-mode=places,round-precision=2,detect-weight=true,detect-family=true,number-unit-product =0,tight-spacing=true]{#1}%
}

\newcommand{\hide}[1]{}

\newcommand{\tr}[3][]{%
\ifthenelse{\equal{#1}{}}{${\text{\frmt{#2}}}_{\textcolor{dimgray}{\pm\frmt{#3}}}$}{\textbf{\frmt{#2}}$_{\textcolor{dimgray}{\pm\text{\textbf{\frmt{#3}}} }}$}%
}

% \newcommand\given[1][]{\:#1\vert\:}
%%% END COMMANDS %%%

\title{Inducing Gaussian Process Networks}

\author{
Alessandro Tibo\footnote{Contact Author}\And
Thomas Dyhre Nielsen
\affiliations
Computer Science Department, Aalborg University\\
\emails
\{alessandro, tdn\}@cs.aau.dk
}

% \author{
% Anonymous
% }

\begin{document}

\maketitle
\begin{abstract}
    Gaussian processes (GPs) are powerful but computationally expensive machine learning models, requiring an estimate of the kernel covariance matrix for every prediction. In large and complex domains, such as graphs, sets, or images, the choice of suitable kernel can also be non-trivial to determine, providing an additional obstacle to the learning task. Over the last decade, these challenges have resulted in significant advances being made in terms of scalability and expressivity, exemplified by, e.g., the use of inducing points and neural network kernel approximations.
    In this paper, we propose inducing Gaussian process networks (\FW), a simple framework for simultaneously learning the feature space as well as the inducing points. The inducing points, in particular, are learned directly in the feature space, enabling a seamless representation of complex structured domains while also facilitating scalable gradient-based learning methods. 
    We consider both regression and (binary) classification tasks and report on experimental results for real-world data sets showing that \FWs provide significant advances over state-of-the-art methods. We also demonstrate how \FWs can be used to effectively model complex domains using neural network architectures.

\hide{
    Gaussian processes are extremely powerful machine learning models, which along with predictions provide an estimate of the uncertainty associated with the predictions.
    However, being non-parametric models, Gaussian processes are computational expensive as they required an estimate of the kernel covariance matrix for every prediction. Furthermore, the choice of the kernel is not trivial in certain complex domains, such as graphs, sets, or images. To address these challenges, significant progress have been made in reducing the computational cost by, e.g., using inducing points and alleviating the kernel specification by neural network approximations. However, for complex domains, these tasks are still challenging. 
    To alleviate the problems, we introduce a generalization of maximum likelihood estimation of the posterior of Gaussian processes for both regression and binary classification tasks. Here, we exploit neural networks to learn the feature space, and a set of inducing point in the feature space to reduce the computational cost of the covariance matrix. We report experiments on real data showing that our approach outperforms state-of-the-art approaches, and can be effectively combined with complex neural network architectures. Finally, our framework can be seen as a way to directly outputs uncertainty from neural networks.}

\end{abstract}

\section{Introduction}

Gaussian processes are powerful and attractive machine learning models, in particular in situations where uncertainty estimation is critical for performance, such as for medical diagnosis~\cite{dusenberry_analyzing_2020}.% and reinforcement learning~\cite{pmlr-v120-buisson-fenet20a}.

Whereas the original Gaussian process formulation was limited in terms of scalability, there has been significant progress in scalable solutions with \cite{Quinonero-Candela_Rasmussen_2005} providing an early unified framework based on inducing points as a representative proxy of the training data. The framework by \cite{Quinonero-Candela_Rasmussen_2005} has also been extended to variational settings \cite{titsias_variational_2009,wilson_stochastic_2016}, further enabling a probabilistic basis for reasoning about the number of inducing points \cite{Uhrenholt_Charvet_Jensen_2021}. In terms of computational scalability, methods for leveraging the available computational resources have recently been considered \cite{Nguyen_Filippone_Michiardi_2019,Wang_Pleiss_Gardner_Tyree_Weinberger_Wilson_2019}, with \cite{Chen_Zheng} also providing insights into the theoretical underpinnings for gradient descent based solutions in correlated settings (as in the case for GPs).

Common for the inducing points-based approaches to scalability is that the inducing points live in the same space as the training points (see e.g.~\cite{snelson2006sparse,titsias_variational_2009,hensman2013gaussian,damianou2013deep}). However, learning inducing points in the input space can be challenging for complex domains (e.g.\ over graphs),  domains with high dimensionality (e.g.\ images), or domains with varying cardinality (e.g.\ text or points clouds) \cite{lee2019set,aitchison2021deep}.

%. For example, when processing point clouds with high or varying cardinality, recent approaches have exploited inducing points as a proxy for the data (see e.g.~\cite{lee2019set}).

%Common for the inducing points-based approaches to scalability is that the inducing points live in the same space as the training points (see e.g.~\cite{snelson2006sparse,titsias_variational_2009,hensman2013gaussian,damianou2013deep}). However, learning inducing points in the input space can be challenging for complex domains (e.g.\ over graphs) or for domains with high dimensionality (e.g.\ images and points clouds). For example, when processing point clouds with high or varying cardinality, recent approaches have exploited inducing points as a proxy for the data (see e.g.~\cite{lee2019set}).
% This constraint makes it difficult to learn inducing points for complex data sets such as data based based on images, graphs or point clouds. For this latter for example, this would require to be able to learn representative sets of arbitrarily cardinality that resemble complex object in the dataset e.g. airplains or plants. For the point cloud domain, there already exist approaches that exploit inducing points in the feature space to reduce the number of pairwise comparisons to learn point cloud features (see e.g.~\cite{lee2019set}).

In this paper, we propose inducing Gaussian process networks (\FW) as a simple and scalable framework for jointly learning the inducing points and the (deep) kernel \cite{Wilson_Hu_Salakhutdinov_Xing_2016}. Key to the framework is that the inducing points live in feature space rather than in the input space. By defining the inducing points in the feature space, we are able to represent the data distribution with a simple base kernel (such as the RBF and dot-product kernel), relying on the expressiveness of the learned features for capturing complex interactions. 

For learning \FWs, we rely on a maximum likelihood-based learning objective that is optimized using mini-batch gradient descent \cite{Chen_Zheng}. This setup allows the method to scale to large data sets as demonstrated in the experimental results. Furthermore, by only having the inducing points defined in feature space, we can seamlessly employ standard gradient-based techniques for learning the inducing points (even when the inputs space is defined over complex discrete/hybrid objects) without the practical difficulties sometimes encountered when learning deep neural network structures.

% \thomas{Do we want to go down this path?}

We evaluate the performance of the proposed framework on several well-known data sets and show significant improvements compared to state of the art methods. We provide a qualitative analysis of the framework using a two-class version of the MNIST dataset. This is complemented by a more detailed quantitative analysis using the full MNIST and CIFAR10 data sets. Lastly, to demonstrate the versatility of the framework, we also provide sentiment analysis results for both a text-based and a graph-based dataset derived from the IMDB movie review dataset.

\hide{Novelty and some implementation details:
\begin{itemize}
    \item Provide an uncertainty estimation for neural networks;
    \item GP are non parametric models, meaning that they depend on all the points in the training set. Set a small number of inducing points drastically reduces the complexity;
    \item We jointly learn the inducing points and the kernel;
    \item We apply the framework to any type of instance space (images, graphs, point cloud, text, and son on). Domains typically not explored by GP people.
    \item \alessandro{\textbf{DEPRECATED}. Superseded by the exact inverse matrix.}We approximate the inverse of $K_{II}$ by using the Netwon-Shulz method~\cite{higham2008} during training.
\end{itemize}}

\section{The inducing Gaussian process networks (\FW) framework}
\label{sec:framework}
We start by considering regression problems, defined over an input space $\mathcal{X}$
and a label space of observations $\mathbb{R}$, modeled by a Gaussian process:  
\begin{equation}\label{eq:gp}
    f \sim \mathcal{GP}(0, k(\cdot, \cdot)), \quad y = f(x) + \epsilon, \quad x \in \mathcal{X}, y \in \mathbb{R},
\end{equation}
where $k(\cdot,\cdot):\mathcal{X}\times \mathcal{X}\to \mathbb{R}$ denotes the kernel describing the prior covariance, and $\epsilon \sim \mathcal{N}(0, \sigma_{\epsilon}^2)$ is the noise associated with the observations. We assume access to a set of data points $\mathcal{D} = \{(\bmx_i,y_i)\}^n_{i=1}$ generated from the model in  Equation~\ref{eq:gp}, and we seek to learn the parameters that define $k$ and $\sigma_{\epsilon}$ in order to predict outputs for new points $\bmx_*$ in $\mathcal X$. In what follows we shall use $X$ and $\bmy$ to denote $(\bmx_1, \ldots ,\bmx_n)\T$ and $(y_1,\ldots y_n)\T$, respectively.
%Here, we propose to model $k$ with a neural network $g:\mathcal{X}\to \mathbb{R}^d$ parametrized by $\theta_g$. 

Firstly, we propose to embed the input points using a neural network $\bmg_{\bmtheta_g}:\mathcal{X}\to \mathbb{R}^d$ parameterized by $\bmtheta_g$.
Secondly, for modeling the kernel function $k$, we introduce a set of $m$ inducing points $Z = (\bmz_1, \ldots, \bmz_m)\T$, $\bmz_i \in \mathbb{R}^d$, together with a (linear) pseudo-label function $r_{\bmtheta_r}:\mathbb{R}^d \to \mathbb{R}$ parameterized by $\bmtheta_r$. We will use $\bmr = (r_{\bmtheta_r}(\bmz_1),\ldots, r_{\bmtheta_r}(\bmz_m))\T$ to denote the evaluation of $r$ on $Z$, where $\bmr$  will play a r\^{o}le similar to that of inducing variables \cite{Quinonero-Candela_Rasmussen_2005}. In the remainder of this paper, we will sometimes drop the parameter subscripts $\bmtheta_g$ and $\bmtheta_r$ from $\bmg$ and $\bmr$ for ease of representation. An illustration of the model and the relationship between the training data and the inducing points can be see in Figure~\ref{fig:GP-illustration}.
\hide{
Firstly, we propose to embed the input points using a neural network $\bmg_{\bmtheta_g}:\mathcal{X}\to \mathbb{R}^d$ parameterized by $\bmtheta_g$.
Secondly, we exploit a set of $m$ inducing points $I = {(\bmz_i, r_{\bmtheta_I}(\bmz_i))}_{i=1}^m$ for modeling the kernel function $k$, where
$\bmz_i \in \mathbb{R}^d$ and $r_{\bmtheta_I}:\mathbb{R}^d \to \mathbb{R}$ is a (linear) function parameterized by $\bmtheta_I$; $r_{\bmtheta_I}(\bmz_i)$ will play the r\^{o}le of inducing variables. In the remainder of this paper, we will sometimes drop the parameter subscripts $\bmtheta_g$ and $\bmtheta_I$ from $\bmg$ and $\bmr$ for ease of representation. An illustration of the model can be see in Figure~\ref{fig:GP-illustration}.
}
\begin{center}
\begin{figure}[htbp]
    \includegraphics[width=\columnwidth]{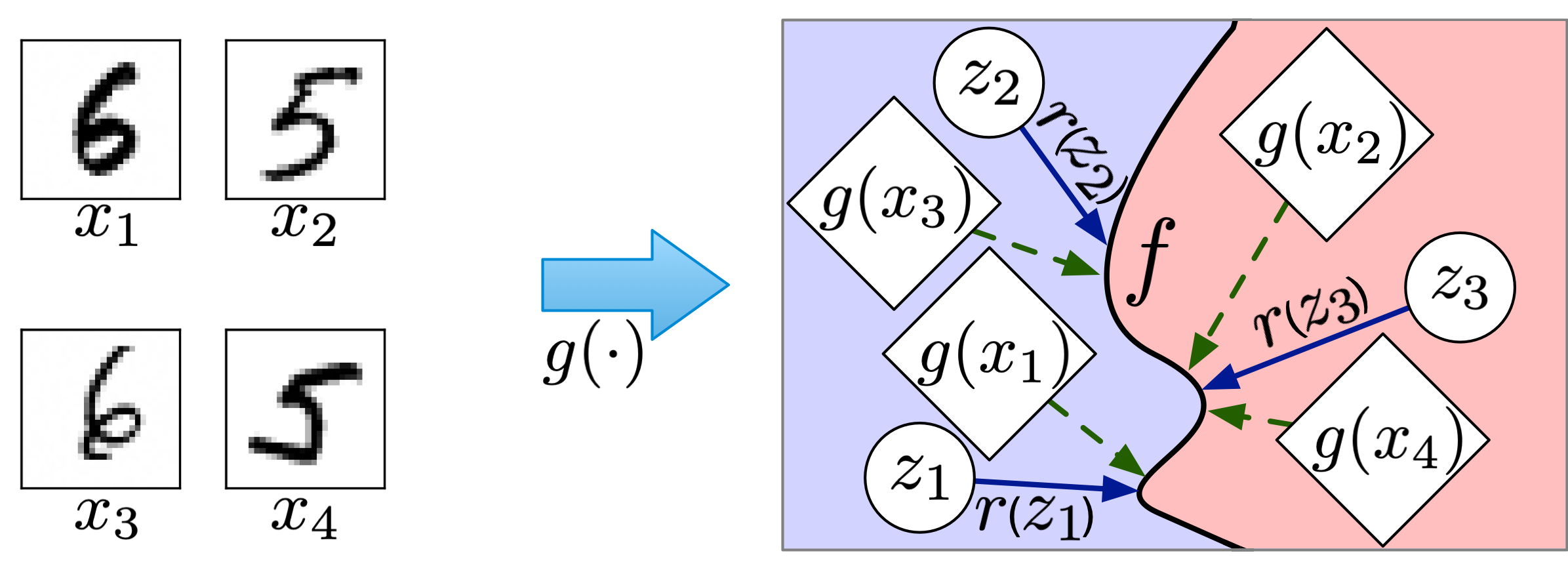}
    \caption{To the left four MNIST digits are embedded in the feature space by the neural network $\bmg$. To the right, the feature space where both features and inducing points exists. The observations associated with the inducing points $\bmz_1$, $\bmz_2$, and $\bmz_3$ are given by the pseudo-label function $r$, while the predictions associated with $\bmg(\bmx_1)$, $\bmg(\bmx_2)$, $\bmg(\bmx_3)$, and $\bmg(\bmx_4)$ are estimated using the  Gaussian process posterior.}
    \label{fig:GP-illustration}
\end{figure}    
\end{center}

We finally define 
$k:\mathbb{R}^d \times \mathbb{R}^d \to \mathbb{R}$ as the kernel between pairs of vectors in $\mathbb{R}^d$. In particular, we denote with
\begin{eqnarray*}\nonumber
    &(K_{XX})_{ij} = k(\bmg(\bmx_i), \bmg(\bmx_j)) \
     &(K_{ZX})_{ij}  = k(\bmz_i, \bmg(\bmx_j))  \\\nonumber
    &(K_{XZ})_{ij} = k(\bmg(\bmx_i), \bmz_j)) \
     &(K_{ZZ})_{ij} = k(\bmz_i, \bmz_j),
\end{eqnarray*}
the four matrices corresponding to the kernel evaluations for all pairs of observed inputs and inducing points. 

The joint distribution over the function values at the observed inputs and the inducing points is now given as
\begin{equation*}
\left [ 
\begin{matrix}
\bm{y} \\
\bm{r}
\end{matrix}
\right ] \sim \mathcal{N} \left( \bm{0}, \left [ 
\begin{matrix}
\quad K_{XX} + \sigma^2_{\epsilon}I  & K_{XZ} \\
K_{ZX} & K_{ZZ}
\end{matrix}
\right ]\right ).
\end{equation*}

For a given data set $\calD$, our goal is to jointly learn $\bmtheta = \{ Z, \bmtheta_g, \bmtheta_r,  \sigma_{\epsilon} \}$ by considering the marginal likelihood
\begin{equation}\label{eq:posterior}
p(\bmy | X, \bmtheta) = \mathcal{N}(\hat{\bmy}, K_{X|Z}),
\end{equation}
where $\hat{\bmy}$ is the predictive mean
\begin{equation}\label{eq:gp:cond}
    \hat{\bmy} = K_{XZ} K_{ZZ}^{-1} \bmr_{\bmtheta_r}
\end{equation}
and $K_{X|Z}$ is the posterior kernel given the inducing points, i.e.
\begin{equation}
    K_{X|Z} = (K_{XX}+\sigma^2_{\epsilon}I) - K_{XZ} K_{ZZ}^{-1} K_{ZX}.
\end{equation}
Note that the kernel $K_{X|Z}$ is implicitly parameterized by $\bmtheta_g$ through the embedding function $\bmg_{\bmtheta_g}$.

As in \cite{Chen_Zheng}, we define our objective function in terms of the negative log-likelihood
of the posterior of Equation~\ref{eq:posterior}:
\begin{eqnarray}\label{eq:ill:loss}
 \ell(\bmtheta;\mathcal{D}) =  & - &\frac{1}{2}(\bmy - \hat{\bmy})^T K_{X|Z}^{-1}(\bmy - \hat{\bmy}) -\frac{1}{2}\log |K_{X|Z}| \nonumber \\
 & - & \frac{n}{2}\log(2\pi), 
\end{eqnarray}
which we can minimize wrt.\ the parameters $\bmtheta$ using mini-natch gradient descent with appropriate scaling factors for the gradients \cite{Chen_Zheng}. The gradient of $\ell(\bmtheta;\mathcal{D})$ wrt.\ $\bmtheta$ can be expressed as:
\begin{equation}\label{eq:grad:ill:loss}
\begin{split}
\nabla_{\bmtheta} \ \ell(\bmtheta;\mathcal{D}) &=   K_{X|Z}^{-1}(\bmy - \hat{\bmy})\nabla_{\bmtheta}\hat{\bmy} -\frac{1}{2} \ tr(K_{X|Z}^{-1} \nabla_{\bmtheta}K_{X|Z})\\
 &  + K_{X|Z}^{-T}(\bmy - \hat{\bmy})(\bmy - \hat{\bmy})^T K_{X|Z}^{-T}\nabla_{\bmtheta}K_{X|Z}^{-1}, 
\end{split}
\end{equation}%
%\begin{eqnarray}\label{eq:grad:ill:loss}
%\nabla_{\bmtheta} \ \ell(\bmtheta;\mathcal{D}) = & - & \frac{1}{2}(\bmy - \hat{\bmy})^T \nabla_{\bmtheta} K_{X|Z}^{-1}(\bmy - \hat{\bmy}) \nonumber \\
% &+&  \frac{1}{2}(\bmy - \hat{\bmy})^T K_{X|Z}^{-1}\nabla_{\bmtheta}\hat{\bmy} \nonumber \\ 
%& + & \frac{1}{2} \nabla_{\bmtheta}\hat{\bmy}^T K_{X|Z}^{-1}(\bmy - \hat{\bmy}) \nonumber \\
%& - & \frac{1}{2} \ tr(K_{X|Z}^{-1} \nabla_{\bmtheta}K_{X|Z}),
%\end{eqnarray}
where $tr(A)$ represents the trace of a matrix $A$. Note that in Equation~\ref{eq:grad:ill:loss} both $\hat{\bmy}$ and $K_{X|Z}$ depend on $\bmtheta$. 
%Similarly to~\cite{Chen_Zheng}, we propose to update the parameters $\bmtheta$ by using stochastic gradient descent on random mini-batches sampled from $\mathcal{D}$. 

Based on the chain rule of differentiation, we see that the gradient of Equation~\ref{eq:grad:ill:loss} with respect to the inducing points $Z$ does not depend on $\nabla \bmg_{\bmtheta_g}$, that is the gradients of $\bmg_{\bmtheta_g}$ have no impact on the updates of $Z$. %Formally, we prove (in the Supplementary material) the following proposition:
\begin{prop}
The gradient $\nabla_Z \ell(\bmtheta;\mathcal{D})$ does not depend on $\nabla \bmg_{\bmtheta_g}$.
\end{prop}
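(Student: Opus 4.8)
The plan is to track the functional dependence of $\ell$ on the two parameter blocks $Z$ and $\bmtheta_g$ separately, and to show that the embeddings $\bmg(\bmx_i)$ enter the loss only as frozen arguments of the kernel $k$ once we differentiate with respect to $Z$. First I would record which building blocks of $\ell$ actually depend on $Z$: reading off Equation~\ref{eq:gp:cond} and the definition of $K_{X|Z}$, the loss depends on $Z$ only through $K_{XZ}$, $K_{ZX}$, $K_{ZZ}$, and $\bmr$, since each of these is assembled from the inducing points via $k(\cdot,\bmz_j)$, $k(\bmz_i,\cdot)$, $k(\bmz_i,\bmz_j)$, and $r_{\bmtheta_r}(\bmz_j)$. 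The decisive structural fact is that $K_{XX}$ is built purely from the feature vectors $\bmg(\bmx_i)$ and carries no $Z$-dependence, so $\nabla_Z K_{XX}=0$.

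Next I would expand the generic gradient of Equation~\ref{eq:grad:ill:loss} in the case $\bmtheta=Z$, substituting $\hat{\bmy}=K_{XZ}K_{ZZ}^{-1}\bmr$ and $K_{X|Z}=(K_{XX}+\sigma^2_{\epsilon}I)-K_{XZ}K_{ZZ}^{-1}K_{ZX}$. After applying the product and inverse-differentiation rules (so that $\nabla_Z K_{ZZ}^{-1}=-K_{ZZ}^{-1}(\nabla_Z K_{ZZ})K_{ZZ}^{-1}$ and similarly for $\nabla_Z K_{X|Z}^{-1}$), every surviving factor reduces to one of $\nabla_Z K_{XZ}$, $\nabla_Z K_{ZX}$, $\nabla_Z K_{ZZ}$, or $\nabla_Z\bmr$, the term $\nabla_Z K_{XX}$ having vanished. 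Each of these is a partial derivative of $k$ (or of $r_{\bmtheta_r}$) with respect to an inducing point, evaluated at a pair $(\bmg(\bmx_i),\bmz_j)$ or $(\bmz_i,\bmz_j)$; in every such evaluation the embedding $\bmg(\bmx_i)$ occurs only as a fixed input value, never as something being differentiated.

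The observation that makes the claim precise is that $\bmg(\bmx_i)$ is a function of $\bmtheta_g$ and $\bmx_i$ alone, so $\partial\bmg(\bmx_i)/\partial Z=0$; consequently no term of $\nabla_Z\ell$ is generated by differentiating through the network, and the Jacobian $\nabla\bmg_{\bmtheta_g}=\partial\bmg/\partial\bmtheta_g$ never appears. I would conclude that $\nabla_Z\ell$ is a function of $Z$, the frozen embeddings $\{\bmg(\bmx_i)\}$, $\bmtheta_r$, and $\sigma_{\epsilon}$ only, so the gradient update for $Z$ needs no backpropagation through $\bmg$. The main obstacle here is organizational rather than conceptual: one must confirm the cancellation $\nabla_Z K_{XX}=0$ and verify summand by summand that $\bmg(\bmx_i)$ is always an argument of $k$ and never gets differentiated. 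The cleanest way to avoid a tedious term-by-term calculation is to argue at the level of dependency structure, treating $\ell$ as a function of the independent blocks $(Z,\{\bmg(\bmx_i)\},\bmtheta_r,\sigma_{\epsilon})$ and invoking $\partial\bmg(\bmx_i)/\partial Z=0$, rather than differentiating each kernel entry explicitly.
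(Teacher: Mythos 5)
Your proof is correct and takes essentially the same route as the paper, which justifies the proposition purely by the chain-rule/dependency-structure observation: since $K_{XX}$ carries no $Z$-dependence and $Z$ enters $\ell$ only through $K_{XZ}$, $K_{ZX}$, $K_{ZZ}$, and $\bmr$, differentiating with respect to $Z$ produces only partial derivatives of $k$ and $r_{\bmtheta_r}$ evaluated at the frozen embeddings $\bmg(\bmx_i)$, so $\nabla \bmg_{\bmtheta_g}$ never appears. Your write-up simply fills in the term-by-term details that the paper leaves implicit in its one-line chain-rule remark.
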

This is a key advantage of \FWs as learning the inducing points is therefore not influenced by potentially complex embedding functions $\bmg$ and the entailed optimization difficulties. Additionally, as the inducing points (only) exists in $\mathbb{R}^d$, the underlying learning framework for the inducing points is indifferent to the structure of the input space $\calX$ and whether it is discrete or continuous or defined over, e.g., graphs or images. %A full gradient derivation with respect of $Z$ is reported in the supplementary material.  %Note that the number of inducing points is fixed, therefore $K_{X|I}$ ...

Once the \FW parameters $\bmtheta = \{Z, \bmtheta_g, \bmtheta_r, \sigma_{\epsilon}\}$ have been learned, we can find the predictive distribution for input points $X_*$ and inducing points $Z$, by first considering the joint distribution over the associated function values
\begin{equation*}
\left [ 
\begin{matrix}
\bm{r} \\
\bm{f}_*
\end{matrix}
\right ] \sim \mathcal{N} \left( \bm{0}, \left [ 
\begin{matrix}
K_{ZZ} \quad K_{ZX_*} \\
K_{X_*Z} \quad K_{X_*X_*}
\end{matrix}
\right ]\right ),
\end{equation*}
which in turn gives the predictive distribution 
\begin{equation}\label{equ:GP_predictive}
p(\bmf_* | X_*, \bmtheta) = \calN \left( \hat{\bmf_*},  K_{X_* | Z}\right),
\end{equation}where
\begin{eqnarray*}
\hat{\bmf_*}  & = &  K_{X_*Z}K_{ZZ}^{-1}\bm{r} \\ 
K_{X_* | Z} & = & K_{X_*X_*}-K_{X_*Z}K_{ZZ}^{-1}K_{ZX_*}.
\end{eqnarray*}

\hide{
Note that if the inducing points and $r$ are known, for a given a set of point $X =\{ \bmx_i \ | \ (\bmx_i, y_i) \in \mathcal{D} \}$, we can condition the Gaussian process in order to get an estimate of the observations associated with each $\bmx_i$,
\begin{equation}\label{eq:gp:cond}
    \hat{\bmy} = K_{XI} K_{II}^{-1} r(I_z),
\end{equation}where $I_z=(\bmz_1,\ldots,\bmz_m)^T$. 

Our goal is to jointly learn $\{ I_z, \bmtheta_g, \bmtheta_I,  \sigma_{\epsilon} \}$ by considering the posterior 
\begin{equation}\label{eq:posterior}
p(\hat{\bmy} | X, I_z, \bmtheta_I) = \mathcal{N}(\hat{\bmy}, K_{X|I}),
\end{equation}where $K_{X|I}$ denotes the posterior kernel given the inducing points, i.e.
\begin{equation}
    K_{X|I} = K_{XX} - K_{XI} K_{II}^{-1} K_{IX}.
\end{equation}Note that $K_{X|I}$, provides an estimation of the uncertainty of the predictions of new data points, once the parameters are learned. Following the same derivation of~\cite{Chen_Zheng}, we can minimize the negative log-likelihood
of the posterior of Equation~\ref{eq:posterior}:
\begin{eqnarray}\label{eq:ill:loss}
 \ell(\theta;\mathcal{D}) =  & - &\frac{1}{2}(\bmy - \hat{\bmy})^T K_{X|I}(\bmy - \hat{\bmy}) -\frac{1}{2}\log |K_{X|I}| \nonumber \\
 & - & \frac{n}{2}\log(2\pi).
\end{eqnarray}
}

\subsection{\FWs for classification}\label{sec:gp:classification}

In this section, we extend the Gaussian process networks to classification. For ease of exposition, we only consider binary classification problems, but the general method can straightforwardly be extended to a multi-valued setting. For the experiment results, (see Section~\ref{sec:exp:mnist:cifar10}) we therefore use a one-vs-all approach to solve multi-class classification tasks. We assume that the label space is given by $\mathcal{Y} = \{0, 1\}$ and that we have access to a data set $\mathcal{D} = \{ (\bmx_i,y_i) \}_{i=1}^n$, where the underlying data generating process is defined by a latent function $f(\bmx)$ with a GP prior
\[
    f \sim \mathcal{GP}(0, k(\cdot, \cdot))
\]
and $y | \bm{x} \sim \Phi (f(\bm{x}))$, where $\Phi (\cdot)$ is the cumulative Gaussian function; see \cite{RasmussenW06}.

For inference, we calculate the posterior distribution $p(\bm{y}\given X_*, \bmtheta)$ given by
$$
p(\bm{y} \given X_*, \bmtheta) = \int p(y \given \bm{f}_*) p(\bmf_*\given X_*,\bmtheta)d\bm{f}_* \,.
$$
Assuming that $\bmf_*\sim \mathcal{N}(\mu, \sigma^2)$ for some $\mu$ and $\sigma$ we get \cite[Section 3.9]{RasmussenW06}:
$$
p(y_* \given \bm{x}_*, \bmtheta) = \Phi(\alpha) \text{ with } \alpha=\frac{\mu}{\sqrt{1+\sigma^2}},
$$
where $\mu$ and $\sigma^2$ can be found from the predictive distribution in Equation~\ref{equ:GP_predictive}.
\hide{
For the calculation of $p(\bm{f}_*\given X_*,\bmetheta)$, we have that:
\begin{equation*}
\left [ 
\begin{matrix}
\bm{r} \\
\bm{f}_*
\end{matrix}
\right ] \sim \mathcal{N} \left( \bm{0}, \left [ 
\begin{matrix}
K_{II} \quad K_{I*} \\
K_{*I} \quad K_{**}
\end{matrix}
\right ]\right ),
\end{equation*}
which gives the conditional distribution

\begin{equation}
\label{equ:pred}    
\bm{f}_* \given X_*, I_z, \bm{r} \sim \mathcal{N}\left( K_{*I}K_{II}^{-1}\bm{r}, K_{**}-K_{*I}K_{II}^{-1}K_{I*}\right ),
\end{equation}
where $\bmr_{\bmtheta_I} = (r_{\bmtheta_I}(\bmz_i))_{i=1:m}\T$
}
% \subsubsection{Marginal likelihood}
% \thomas{We need to decide on how much of this should go to the supplementary material}

For learning the \FW parameters $\bmtheta$ we perform maximum likelihood estimation wrt.\ 
\begin{equation}\nonumber
\ell(\bmtheta; \calD) = \log p(\bm{y} \given X, \bmtheta) = \log \int p(\bm{y} \given \bm{f}) p(\bm{f}\given X,\bmtheta)d\bm{f}.
\end{equation}
In order to evaluate the integral, we rely on a Laplace approximation centered around
$$
\hat{\bm{f}}=\arg\max_{\bm{f}} \log p(\bmy | \bmf) + \log p(\bmf | X, \bmtheta),
$$ 
(found using Newton's method) yielding the log-likelihood approximation
\begin{equation}\nonumber
\ell(\bmtheta;\calD) \approx \log p(\bmy | \hat{\bmf}) + \log p(\hat{\bmf} | X, \bmtheta) - \frac{1}{2}\log(|A|) + c,
\end{equation}where $c$ represent the accumulated constant terms and 
\begin{equation}\nonumber
A={-\nabla\nabla (\log p(\bmy | \bmf) + \log p(\bmf | X, \bmtheta))}_{|\bm{f}=\hat{\bm{f}}}.
\end{equation}
The form of $p(\hat{\bmf} | X, \bmtheta)$ is given in Equation~\ref{eq:posterior} and $\nabla\nabla (\log p(\bmy | \hat{\bmf})$ can be found in, e.g., \cite{RasmussenW06}. 

Optimizing this log-likelihood approximation can now straightforwardly be achieved by interleaving mini-batch gradient descent and Newton's approximation. As for regression, we here also have that the gradient of (the Laplace approximation of) $\ell(\bmtheta;\calD)$ wrt.\ the inducing points $Z$ does not depend on $\nabla\bmtheta_g$ of the embedding function $\bmg$. Hence, \FWs for classification enjoy the same properties when learning the inducing points as \FWs for regression. Additional details can be found in the supplementary material.

\subsection{Computational complexity}
The main aspect related to the computational complexity of the \FW framework concerns the kernel computation. If $n_z$ is the number of inducing points and $b$ is the mini-batch size (recall that for training we use mini-batch gradient descent), then the complexity of computing the kernel grows linearly with $n_z$ , i.e. $\mathcal{O}(n_z \cdot b)$. Clearly, the choice of the number of inducing points has a strong impact on the computational complexity. The training process also consists of matrix inversion operations. However, as \FW can run on GPUs, matrix inversions can be efficiently computed~\cite{sharma2013fast}.
% \alessandro{We can add something about the computational complexity of the kernel in function of the inducing points. That's the bottleneck of the framework but it is not a big issue as Attention mechanism are doing the same thing. O(n*k), where n is the number of examples in the mini-batch and k is in the number of inducing points.}

% \thomas{Perhaps something about out approach supporting GPU implementations -- but maybe that is standard by now ...}

\section{Related works}
\label{sec:related_work}

The combination of kernels and neural networks has previously been explored, most notably in the context of deep kernel learning \cite{Wilson_Hu_Salakhutdinov_Xing_2016}. Using a a deep neural network architecture, \cite{Wilson_Hu_Salakhutdinov_Xing_2016} transform the input vectors into feature space based on which a base kernel is applied (here the RBF kernel and the spectral mixture base kernel \cite{Wilson_Adams_2013} are used). The kernel parameters and the neural network weights are jointly learned by maximizing the marginal likelihood of the Gaussian process, but relying on a pre-training of the underlying deep neural network architecture. This work has been subsequently extended into a variational setting \cite{wilson_stochastic_2016}, also providing support for multi-class classification.

A key difference between our approach and \cite{Wilson_Hu_Salakhutdinov_Xing_2016,wilson_stochastic_2016} is in our choice of inducing points. In \cite{Wilson_Hu_Salakhutdinov_Xing_2016}, the inducing points are placed on a regular multi-dimensional lattice based on which the deep kernel is evaluated. In \FW the inducing points are (only) defined in feature space, where they are treated as parameters and learned jointly together with the neural network weights and (any) kernel parameters. Furthermore, as shown in Section~\ref{sec:experiments}, our learning scheme is end-to-end and does not rely on pre-training of networks as in \cite{Wilson_Hu_Salakhutdinov_Xing_2016}. Finally, \FW relies on mini-batch gradient descent \cite{Chen_Zheng}, thus avoiding the GP-KISS kernel approximation of \cite{Wilson_Hu_Salakhutdinov_Xing_2016}. 
Several other related works exploit inducing points. For instance, \cite{titsias_variational_2009,hensman2013gaussian,damianou2013deep} propose to maximize a lower bound of the exact marginal likelihood to learn the inducing points in the input space, in contrast to our \FW framework where the inducing points are defined in feature space.
Closely related to \FW is~\cite{snelson2006sparse}, where inducing points are jointly learned with kernel parameters using gradient descent. However, in contrast to \FW, the continuous optimization of $Z$ proposed by~\cite{snelson2006sparse} is considerably more simplified as the inducing points are learned in the input space.
Similarly to the proposed framework, \cite{aitchison2021deep} exploit inducing Gram matrix. The Gram matrix are used to sample inducing points in the feature space, and subsequent GP predictions are conditionally sampled on the inducing points in the features space. This process, in contrast to \FWs, relies on a doubly-stochastic variational inference process. 

%In terms of the learning model parameters, \FW leverages the insights of \cite{Chen_Zheng}, which addresses the problem that with correlated samples (as in the case of GPs) the stochastic gradient is a biased estimator of the full gradient. Specifically, \cite{Chen_Zheng} provides SGD minibatch convergence guarantees. 

The papers cited above are representatives of GP methods that are methodologically related to the proposed \FW framework. Not all of the cited methods are, however, in line with state of the art in terms of, e.g., accuracy results, hence the experimental section below also includes descriptions and comparisons of other GP methods, which forms the basis for the empirical evaluation and analysis.

% Briefly, EGP leverages GPU parallelization to compute the exact covariance matrix 
%on large datasets, SGPR uses a variational formulation for sparse approximations that jointly
%infers the inducing inputs and the kernel hyperparameters by maximizing a lower bound
%of the true log marginal likelihood, SVGP variationally decomposes 
%Gaussian processes to depend on a set of globally relevant inducing variables, and sgGP exploits
%stochastic gradient descent on the Gaussian process likelihood to learn the kernel hyperparameters.

%Briefly, DIWP
%defines deep kernel processes where positive
%definite Gram matrices are progressively transformed by 
%nonlinear kernel functions and by sampling from (inverse) Wishart distributions. DGP uses graphical models to nest layers of Gaussian processes. NNGP adopts a backpropagation-compatible algorithm for learning a probability distribution on the weights of a neural network to estimate the uncertainty of the model. 

\hide{
\begin{itemize}
    \item \cite{Chen_Zheng}: The paper is about training GP by using a log-likelihood approach and stochastic gradient descent. 
    They use standard kernels. No inducing point. They report results for Benchmark datasets.
    \item \cite{achituveNYCF21}: The paper is about using GP Tree for solving  classification tasks by applying the P\'{o}ly-Gamma Augmentation. It also uses inducing points but apply the method to ``standard'' kernels. They report results for CIFAR-10, CIFAR-100 (but they use the feature extracted from ResNets). They also report results for few shot learning tasks.
    \item \cite{LeeBNSPS18}: They used variational training of neural networks. Experiments on MNIST and CIFAR-10/100.  
    \item \cite{titsias_variational_2009}: One of the canonical papers on using variational methods for learning GPs and, in particular, the inducing points and kernel parameters at the same time. The learning methodology seems to follow quite standard variational, but the variational approximation is chosen quite nicely. 
    \item \cite{wilson_stochastic_2016} Gaussian processes are applied to subsets of output features of deep neural architectures. The resulting Gaussian processes are combined additively, capturing the interaction between the features. The structure is specified to support stochastic variational inference.
    \item \cite{aitchison2021deep} They proposed a novel deep kernel process, called the deep inverse Wishart process, which allows the inducing-point
variational inference scheme operate on the Gram matrices not the features, in contrast with deep Gaussian processes.
    \item \cite{shi2020sparse} They proposed a method to decompose a Gaussian process in to two sub-processes, one is spanned by the inducing points and the other captures the variations.
    
\end{itemize}

Deep learning uncertainty estimation
\begin{itemize}
    \item \cite{abdar2021review}: very recent survey of the SOTA about uncertainty of DL models. 
    \item \cite{pearce2018high}: they estimated prediction intervals of neural networks to quantifying uncertainty for regression tasks. They do so by designing a specific loss function.
    \item \cite{dusenberry_analyzing_2020} mostly a motivational study where the role of model uncertainty is analzyed in relation to medical health records. In terms of methods, focus is on ensemble and variational approaches.
    \item \cite{guo_calibration_2017} Analyzes the degree of calibration in neural networks, provide recommendations for how the calibration can be improved. 
    \item \cite{blundell_weight_2015} Introduces Bayes by backprop, which employs a variational inference engine for calculating posterior distributions over the weights. Posterior distributions are used for both analyzing uncertainty estimates and as basis for a Thomson sampling scheme in a contextual bandit setting.  
\end{itemize}

\cite{*}

}
\section{Experiments}
\label{sec:experiments}
\begin{table*}[!htp]
  \caption{Comparison of root mean square error (RMSE) of different GPs on the benchmark datasets. We report the mean and standard error of RMSE. The best results are in bold (lower is better). For query and borehole datasets, EGP is not able to fit due to memory constraints.}
  \label{tab:rmse:gp:benchmarks}
  \centering
  \begin{small}
    \begin{sc}
      \begin{tabular}{cccccccc}
        \toprule
        Dataset & Size & D & \FW (Ours) & sgGP & EGP & SGPR & SVGP \\
        \midrule
        Levy & 10,000 & 4 & \tr[best]{0.178}{0.011} & \tr{0.265}{0.003} & \tr{0.312}{0.003} & \tr{0.564}{0.010} & \tr{0.582}{0.013} \\
        Griewank & 10,000 & 6 & \tr[best]{0.053}{0.004} & \tr{0.071}{0.000} & \tr{0.185}{0.073} & \tr{0.132}{0.003} & \tr{0.093}{0.005} \\
        Borehole & 1,000,000 & 8 & \tr[best]{0.003}{0.001} & \tr{0.172}{0.000} & --- & \tr{0.176}{0.000} & \tr{0.173}{0.000} \\
        \midrule
        Protein & 45,730 & 9 & \tr[best]{0.642}{0.007} & \tr{0.663}{0.006} & \tr{0.694}{0.004} & \tr{0.715}{0.003} & \tr{0.676}{0.004} \\
        PM2.5 & 41,757 & 15  & \tr{0.314}{0.008} & \tr[best]{0.287}{0.002} & \tr[best]{0.286}{0.003} & \tr{0.638}{0.005} & \tr{0.540}{0.010} \\
        Energy & 19,735 & 27 & \tr[best]{0.738}{0.015} & \tr{0.786}{0.001} & \tr{0.802}{0.067} & \tr{0.843}{0.006} & \tr{0.795}{0.005} \\
        % Bike-day & 731 & 14 & \tr{0.229}{0.155} & &&& 
        Bike-hour & 17,379 & 15 & \tr[best]{0.006}{0.001} & \tr{0.221}{0.002}& \tr{0.228}{0.002}&\tr{0.276}{0.004}& \tr{0.250}{0.010} \\
        Query & 100,000 & 4 & \tr{0.077}{0.004} & \tr[best]{0.053}{0.000} & --- & \tr{0.058}{0.002}& \tr{0.061}{0.000} \\
        % Levy & 10,000 & 4 & \tr[best]{0.185}{0.174} & \tr{0.265}{0.003} & \tr{0.312}{0.003} & \tr{0.564}{0.010} & \tr{0.582}{0.013} \\
        % Griewank & 10,000 & 6 & \tr[best]{0.038}{0.033} & \tr{0.071}{0.000} & \tr{0.185}{0.073} & \tr{0.132}{0.003} & \tr{0.093}{0.005} \\
        % Borehole & 1,000,000 & 8 & \tr[best]{0.002}{0.002} & \tr{0.172}{0.000} & --- & \tr{0.176}{0.000} & \tr{0.173}{0.000} \\
        % Protein & 45,730 & 9 & \tr[best]{0.430}{0.457} & \tr{0.663}{0.006} & \tr{0.694}{0.004} & \tr{0.715}{0.003} & \tr{0.676}{0.004} \\
        % PM2.5 & 41,757 & 15  & \tr[best]{0.152}{0.228} & \tr{0.287}{0.002} & \tr{0.286}{0.003} & \tr{0.638}{0.005} & \tr{0.0540}{0.010} \\ 
        % Energy & 19,735 & 27 & \tr[best]{0.475}{0.723} & \tr{0.786}{0.001} & \tr{0.802}{0.067} & \tr{0.843}{0.006} & \tr{0.795}{0.005} \\
        % Bike-day & 731 & 14 & \tr{0.229}{0.155} & &&& \\
        % Bike-hour & 17,379 & 15 & \tr[best]{0.040}{0.033} & \tr{0.221}{0.002}& \tr{0.228}{0.002}&\tr{0.276}{0.004}& \tr{0.250}{0.010} \\
        % Query & 100,000 & 4 & \tr[best]{0.024}{0.022} & \tr{0.053}{0.000} & --- & \tr{0.058}{0.002}& \tr{0.061}{0.000} \\
        \bottomrule
      \end{tabular}
    \end{sc}
  \end{small}
\end{table*}
In this section, we present and discuss experimental results 
showing the potential of the \FW framework on both regression
and classification tasks. Both settings are aimed 
at showing the ability of \FWs to jointly learn
the inducing points and the model parameters. In the
experiments, we also compare \FWs against other state of the art methods that either learn the kernel parameters or the inducing points. The variety of the experiments
aim to provide insight into the properties of the framework and show the potential of \FWs for diverse real-world datasets, outperforming state-of-the-art methods in the majority of the cases.
\subsection{Regression tasks}\label{sec:exp:regression}
Following the setup described in~\cite{Chen_Zheng}, we compared our approach to PCG-based exact GP (EGP)~\cite{wang2019exact}, sparse GP regression (SGPR)~\cite{titsias_variational_2009}, stochastic variational GP (SVGP)~\cite{hensman2013gaussian}, and sgGP~\cite{Chen_Zheng} on several simulated and real regression benchmark datasets.  Briefly, EGP leverages GPU parallelization and conjugate gradients to compute the exact covariance matrix 
on large datasets, SGPR uses a variational formulation for sparse approximations that jointly
infers the inducing inputs and the kernel hyper-parameters by maximizing a lower bound
of the true log marginal likelihood, SVGP variationally decomposes 
Gaussian processes to depend on a set of globally relevant inducing variables, and sgGP exploits
stochastic gradient descent on the Gaussian process likelihood to learn the kernel hyper-parameters.

The sizes and feature dimensions for each dataset are reported in Table~\ref{tab:rmse:gp:benchmarks}. The simulated datasets (Levy, Griewank and Borehole) are from the Virtual Library of Simulation Experiments\footnote{\url{https://www.sfu.ca/~ssurjano/}}, while the real datasets (Protein, PM2.5, Energy, Bike-hour, and Query) are from the UCI repository\footnote{\url{http://archive.ics.uci.edu/ml}}. 

For all the methods, we report the root mean square error~\cite{Chen_Zheng} compared to the results in the original papers. More details about the other methods in the table can be found in the original publications referenced above. 

For each dataset, we repeated the experiments 10 times.
In each experiment, the dataset was randomly split into a 60\% training set and a 40\% test set. Furthermore, the training set was normalized to have 0 mean and 1 standard deviation, and the test set was scaled accordingly. For all the datasets, we considered a simple neural network
with 3 stacked dense layers with 128 units and ReLU activations. On top of those we added a final feature layer with 64 units. We used 512 inducing points having the same dimension as the feature layer (i.e., 64) and an RBF kernel with $\gamma=1.0$. The model weights are updated with the Adam optimizer for 500 epochs on mini-batches of size 128. 

Table~\ref{tab:rmse:gp:benchmarks} summarizes the results, where for each dataset we report the average and standard deviation of the root mean squared error on the test set over the 10 experimental repetitions. For Levy, Borehole, and Bike-hour datasets, our approach outperforms by a large margin the other methods. While for PM2.5 and Query datasets our approach performed slightly worse than the others, we still obtained comparable results. Overall, our method remained stable as the standard deviation over 10 experiments is small and comparable to sgGP.

\subsection{Toy-MNIST}\label{sec:toy:mnist}
Toy-MNIST is the same as MNIST, but limited to digits belonging to class 5 or 6. The aim of this experiments is to show the ability of \FWs to classify relatively
ambiguous digits, and provide insights into the inducing points and the uncertainty associated with the predictions. In this case, the training and test sets consist of 11,339 and 1,850 images, respectively. The image labels associated with classes 5 and 6 are replaced with 1 and -1, respectively, and we assume the exact same model specification as in Section~\ref{sec:exp:regression}, but now only using 64 inducing points.

By repeating the experiment 10 times we obtain a test set accuracy of $99.25\pm0.09\%$. For each of the 64 inducing points, we retrieve the closest image in the training set based on the chosen kernel in feature space, after which we refine
each of the images (relative to the distance to the closest inducing point) using gradient descent with respect to the input image. The 64 digit images corresponding to the inducing points are shown in Figure~\ref{fig:mnist:ip}.
\begin{figure}
    \centering
    \includegraphics[width=0.98\linewidth]{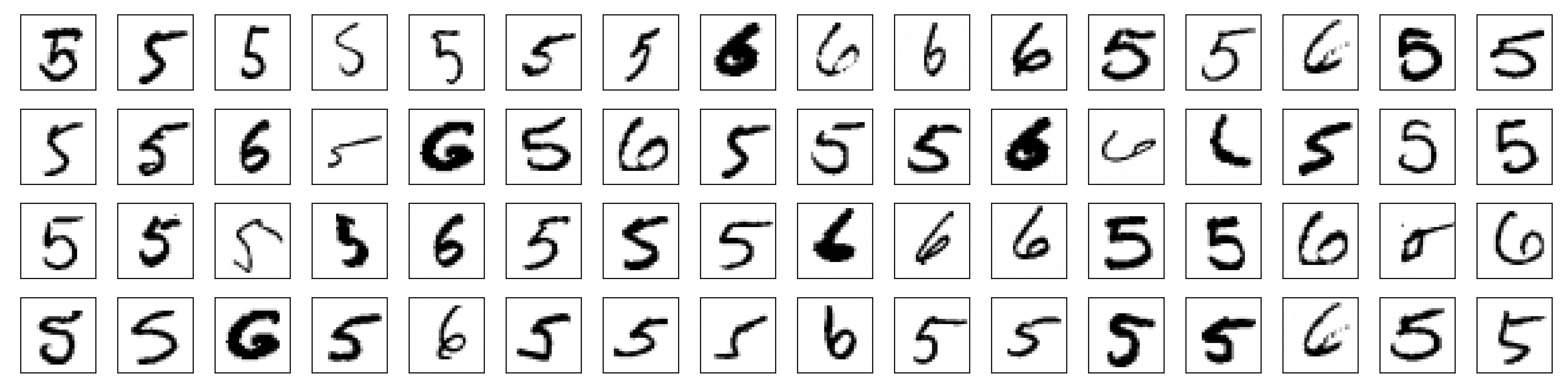}
    \caption{The 64 pseudo images corresponding to the 64 inducing points for the Toy-MNIST experiment.}
    \label{fig:mnist:ip}
\end{figure}
As with standard Gaussian processes, \FWs retains the feature of uncertainty estimation. This is illustrated in Figure~\ref{fig:mnist:top:errors}, which depicts a scatter plot of the images in the test set, given in terms of the expected value of the predictions (x-axis) and the variance of the predictions (y-axis). The two distinct orange and blue clusters contain the digits corresponding to classes 6 and 5. The figure also illustrates some of the extreme predictions: the two digits with means far from 0.5 and low variance correspond to clearly distinguishable digits, while the digit with a mean close to 0.5 and relatively high variance corresponds to a more ambiguous image.
\begin{figure}
    \centering
    \includegraphics[width=0.98\linewidth]{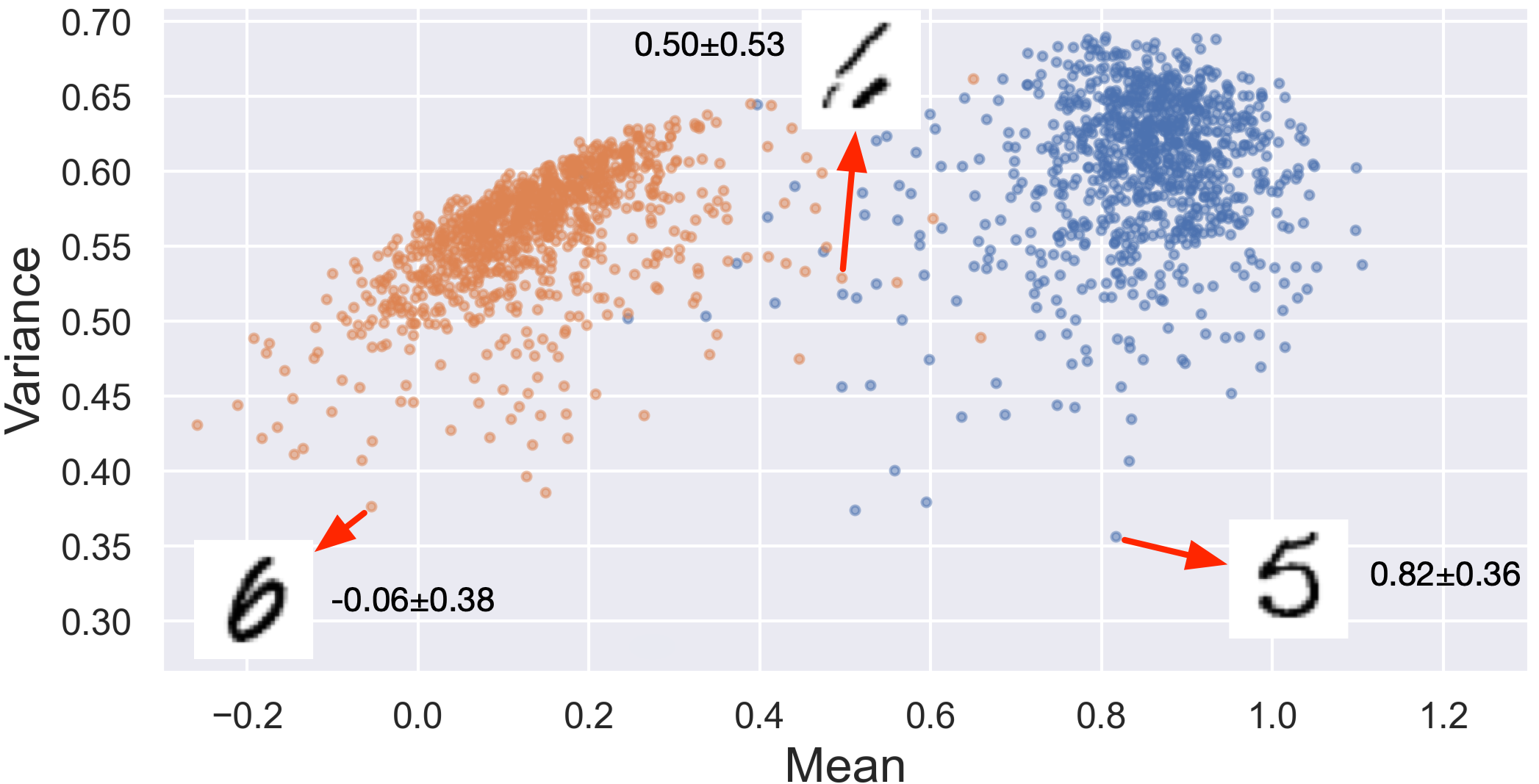}
    \caption{Mean (x-axis) against variance (y-axis) plot for the Toy-MNIST dataset. Orange  and blue dots correspond to images labeled with 6 and 5, respectively. Images associated with low variances and high mean (in absolute value) correspond to clear digits. On the other hand, images with high variance and mean close to the decision boundary, i.e. 0.5, correspond to ambiguous digits.}
    \label{fig:mnist:top:errors}
\end{figure}

\subsection{MNIST and CIFAR10}\label{sec:exp:mnist:cifar10}
In this section we evaluate \FW on computer vision classification tasks. 
Specifically, here we focus on MNIST and CIFAR10. We compare our approach against
deep kernel processes (DIWP)~\cite{aitchison2021deep} as well as Deep Gaussian Processes (DGP)~\cite{damianou2013deep} and NNGP~\cite{blundell_weight_2015}. Briefly, DIWP
defines deep kernel processes where positive
definite Gram matrices are progressively transformed by 
nonlinear kernel functions and by sampling from (inverse) Wishart distributions. DGP uses graphical models to nest layers of Gaussian processes. NNGP adopts a back-propagation-compatible algorithm for learning a probability distribution on the weights of a neural network to estimate the uncertainty of the model. 

%; see Section~\ref{sec:related_work} for a brief description of these methods.

% \alessandro{For the sake of fairness, we have rerun DIWP using the exact same neural network structure we used in our experiments (indicated as DIWP* in Table~\ref{tab:gp:classification} )}
% exhibiting better results than the ones reported in the original
% paper. Nonetheless, ...}

For the sake of simplicity (see Section~\ref{sec:gp:classification}), we decompose with a one-vs-all approach each multiclass task into 10 binary tasks. We again used the same model as in Section~\ref{sec:exp:regression} (this model is comparable to the other competitor models in terms of number of parameters), but with 32 and 16 inducing points for MNIST and CIFAR10, respectively.
For each of the datasets, we repeated the experiments 10 times and report the average accuracy and standard deviation in Table~\ref{tab:gp:classification}. We observe that our method outperform all the others, while keeping the standard deviation comparably small.

% DIWP~\cite{aitchison2021deep}
\begin{table}[!ht]
  \caption{Test set accuracy comparison of MNSIT and CIFAR10 for \FW approach against DGP, NNGP, and DIWP.}
  \label{tab:gp:classification}
  \setlength\tabcolsep{3.0pt}
  \centering
  \begin{small}
    \begin{sc}
      \begin{tabular}{ccccc}
        \toprule
        Dataset & \FW (Ours) & DGP & NNGP & DIWP  \\
        \midrule
        MNIST & \tr[best]{98.00}{0.00} & \tr{96.5}{0.1} & \tr{96.5}{0.0} & \tr{97.7}{0.0} \\
        CIFAR-10 & \tr[best]{51.03}{0.2} & \tr{46.8}{0.1} & \tr{47.4}{0.1} & \tr{50.5}{0.1}  \\
        % IMDB & \tr[best]{85.52}{0.19} & \tr{0.000}{0.000}  \\
        \bottomrule
      \end{tabular}
    \end{sc}
  \end{small}
\end{table}
\begin{figure*}[ht!]
    \centering
    \includegraphics[width=0.98\linewidth]{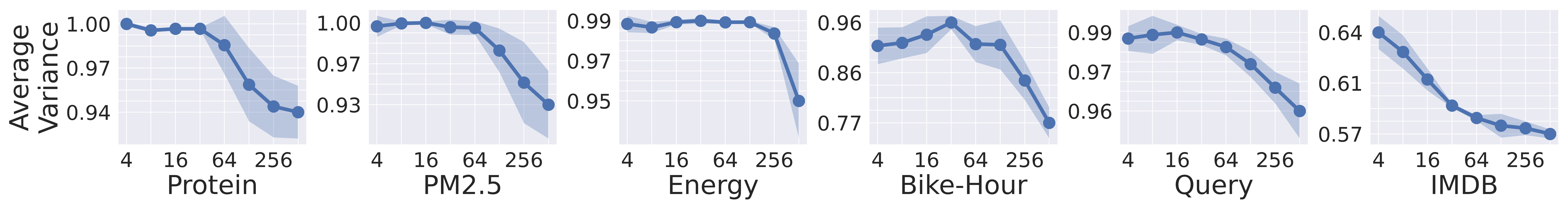}
    \caption{Average variance estimated on 10 runs for \textsc{Protein}, \textsc{PM2.5}, \textsc{Energy}, \textsc{Bike-Hour}, \textsc{Query}, and \textsc{IMBD-Text}. The average variance constantly decreases for all the datasets with number of inducing points.}
    \label{fig:ip:all}
\end{figure*}
\subsection{Structured datasets}\label{sec:structure:datasets}
In this section, we investigate the potential of our model applied to datasets with more complex structure. Specifically, we consider two classification tasks on text and graphs, respectively. The datasets for both tasks are derived from IMDB:
\begin{itemize}
    \item \textsc{IMDB-Text}~\cite{maas2011learning} consists of 25,000 training reviews and 25,000 test reviews. Positive and negative labels are balanced within the training and test sets. We preprocess the data by keeping the top 20,000 words and limiting the length of words in a review to 200.
    \item \textsc{IMDB-Graph} is a social network dataset contained in the collection described in~\cite{yanardag2015deep}. It consists of constructed genre-specific collaboration networks where nodes represent actresses/actors who are connected by an edge if they have appeared together in a movie of a given genre. Collaboration networks are generated for the genres Action and Romance for this dataset. The data then consists of the ego-graphs for all actresses/actors in all genre networks, and the task is to identify the genre from which an ego-graph has been extracted. It contains 1,000 graphs with labels balanced among the two classes.
\end{itemize}
For both experiments, we aim to show that our approach can jointly handle complex data and complex models for extracting features, hence the aim is not to compare against state-of-the-art methods. In particular, the two \FW instantiations for the two datsets differ only in the embedding functions/feature extractors being used. For both datasets, we also provide baseline accuracies obtained by two classifiers sharing the architectures of the feature extractors in the \FWs.  

For \textsc{IMDB-Text}, we used 128 inducing points and trained a transformer-based model \cite{vaswani2017attention} to learn features from the reviews. We repeated the experiments 10 times for the baseline and our approach obtaining accuracies of $84.35\% \pm 0.10$ and  $84.43\% \pm 0.96$, respectively. Similarly to Section~\ref{sec:toy:mnist}, we can also inspect the learned inducing points. For example, included below are the two training set examples (positive and negative) that are closets to two randomly chosen learned inducing points.\footnote{More training set review correspondences to inducing points are reported in the Supplementary material.}
\begin{itemize}
    \item \textit{I was literally preparing to hate this movie so believe me when I say this film is worth seeing ... my score 7 out of 10.}
    \item \textit{I can't believe that so many are comparing this movie to Argento's ... If you're looking for a good horror movie look elsewhere.}
\end{itemize}

For the \textsc{IMDB-Graph}, we follow the setup described by~\cite{tibo2020learning} and represent each graph as a set of neighborhoods. We performed a 10 times 10 fold cross-validation for both the baseline and \FW. We ran 100 epochs of the Adam optimizer with learning rate 0.001 on mini-batches of size 32. We obtained accuracies equal to $72.49\%\pm 0.60$ and $73.40\%\pm 0.63$, for the baseline and 
\FW, respectively. Without any particular effort, in terms of fine tuning model structure and parameters, our method outperforms most of the graph neural networks reported in~\cite{Nguyen2019UGT}.

\section{Ablation on number of inducing points}
Here we study the impact of the number of inducing points for the regression and classification tasks defined in the previous sections. The variance of Gaussian processes estimations decrease with the number of training points~\cite{RasmussenW06}. In our case, the training points are replaced by inducing points but we retain similar result. For the sake of completeness, we slightly reformulate the proposition and provide a proof in the supplementary material.
\begin{prop}
The variance of a test point $(\bmx_*, f_*)$ given a set of inducing points $Z$ can never increase by the inclusion of an additional inducing point $\bmz \not \in Z$. 
\end{prop}
%\begin{prop}
%Consider a Gaussian process with 0 mean and covariance matrix $K$ which produces $\bmr$, $\bms$, $y$. $\bmr$ and $\bms$ %are the labels of inducing points in sets $R$ and $S$, respectively. $y$ is the label associated with the observation %$\bmx$. Then the variance associated with $\bmx$ decreases with the number of inducing points.
%\end{prop}
When training \FWs with different number of inducing points, there is no guarantees that the same subset of inducing are learned during training. However, we still observe a consistent decrease in variance when the number of inducing points increases. For the regression tasks, we report here the results for all the real datasets described in Section~\ref{sec:exp:regression}. For the classification task we focus on \textsc{IMDB-Text} described in Section~\ref{sec:structure:datasets}. In all the cases, we repeated the experiments 10 times varying the number of inducing points (4,8,16,32,64,128,256). Figure~\ref{fig:ip:all} depicts the average variance with error bars for the different datasets. From the figure we see that the average variance consistently decreases with the number of inducing points for all datasets. For some of the datasets with only a few inducing points, the error bars are lower. In these cases, the models were always unable to learn due to the limited number of inducing points. The outputs of the models are therefore constant for those cases. 

\section{Conclusions and future work}
While Gaussian processes are powerful machine learning methods that can estimate uncertainty, they remain intractable on large datasets. Several works have addressed this problem using inducing points but, to the best of our knowledge, these methods remain limited to 
comparatively simple datasets. In this paper, we introduce a framework for learning Gaussian processes
for large and complex datasets by combining inducing points in feature space with neural network kernel approximations.  We empirically showed the ability of our method on standard machine learning benchmarks as well as on structured (graph and text) datasets, with the proposed method outperforming other state-of-the-art approaches.

The flexibility of our \FW framework enables the combination of Gaussian process training with complex deep learning architectures. We believe that our method provides a basis for further research in calibration of uncertainty estimates in deep neural network models. Furthermore, as part of future work, we also plan to investigate how to position \FW within a fully variational setting~\cite{titsias_variational_2009}, exploring the hierarchical structure of the model, possibly in the context of deep kernel processes \cite{aitchison2021deep}.

% Future work:
% \begin{itemize}
%     \item Two level Gaussian process ...
%     \item Closer look at kernel functions; is it doable with a dot-product kernel
% \end{itemize}
%  As a further advantage, the \FW framework has the potential to pave the way to new research directions about reliability of neural networks. In fact, the joint learning setup also enables a seamless integration with neural network architectures, which allows the \FW framework to serve as a general method for reliable uncertainty estimation in neural networks~\cite{guo_calibration_2017}.
% Furthermore, since a GP with, e.g., an RBF kernel corresponds to an infinite basis, we (similar to~\cite{Wilson_Hu_Salakhutdinov_Xing_2016}) have a neural network model with a hidden layer consisting of an infinite number of hidden units.

\subsection*{Acknowledgments}
The work was partly done in connection with the CLAIRE project, Controlling water in an urban environment, supported by VILLUM FONDEN, research grant 34262.

\clearpage

\section*{Appendix: About the variance of inducing points}
% \begin{prop}
% Let us consider a Gaussian process with 0 mean and covariance matrix $K$ which produces $\bmr$, $\bms$, $y$. $\bmr$ and $\bms$ are the labels of inducing points in sets $R$ and $S$, respectively. $y$ is the label associated with the observation $\bmx$. Then the variance associated with $\bmx$ decreases with the number of inducing points.
% \end{prop}
\begin{prop}
The variance of a test point $(\bmx_*, f_*)$ given a set of inducing points $Z$ can never increase by the inclusion of an additional inducing point $\bmz \not \in Z$. 
\end{prop}
\begin{proof}
Let $S$ and $R$ be two sets containing inducing points.
Let $K_{f_* , \bms | \bmr}$ be the variance associated with $f_*$ conditioned only on the inducing point in $R$. Let $K_{f_* |  \bms, \bmr}$ be the variance associated with $f_*$  conditioned only on the inducing point in $R \cup S$. If $K_{X_*X_*}'$ is the top-left scalar in $K_{f_* , \bms | \bmr}$ representing the variance of $f_*$, then we prove that $K_{f_* |  \bms, \bmr} \le K_{X_*X_*}'$.
Let $K$ be the covariance matrix of the Gaussian process
having the following block structure:
\begin{equation*}
K = \left [ 
\begin{matrix}
K_{X_*X_*} & K_{X_*R} & K_{X_*S} \\
K_{RX_*} & K_{RR} & K_{RS} \\
K_{SX_*} & K_{SR} & K_{SS}
\end{matrix}
\right ],
\end{equation*}where the subscripts represents the two sets (among the observation $\{\bmx_* \}$ and inducing points in $R$ and $S$) used to evaluate the kernel matrix. We compute now $K_{f_* , \bms | \bmr}$ as
\begin{align}
K_{f_* , \bms | \bmr} & =  \left [ 
\begin{matrix}
K_{X_*X_*} & K_{X_*S} \\
K_{SX_*} & K_{SS}
\end{matrix}
\right ] - 
\left [ 
\begin{matrix}
K_{X_*R} \\
K_{SR}
\end{matrix}
\right ] K_{RR}^{-1}
\left [ 
\begin{matrix}
K_{RX} K_{RS}
\end{matrix}
\right ] \nonumber \\
& =  \left [ 
\begin{matrix}
K_{X_*X_*}' & K_{X_*S}' \\
K_{SX_*}' & K_{SS}'
\end{matrix}
\right ].
\end{align}Note that $K_{X_*X_*}' \le K_{X_*X_*}$. Finally, 
\begin{equation*}
K_{f_*  | \bms ,  \bmr} = K_{X_*X_*}' - K_{X_*S}'K_{SS}'^{-1}K_{SX_*}' \le K_{X_*X_*}'
\end{equation*}which concludes the proof.
\end{proof}

\subsection*{\FWs for classification}\label{sec:gp:classification}
Here we provide additional details about the full derivation (see Section 2.1 in the main paper) of \FWs for classification. For ease of exposition, we only consider binary classification problems, but the general method can straightforwardly be extended to a multi-valued setting. For the experiment results, (see Section 4.3 in the main paper) we therefore use a one-vs-all approach to solve multiclass classification tasks. We assume that the label space is given by $\mathcal{Y} = \{0, 1\}$ and that we have access to a data set $\mathcal{D} = \{ (\bmx_i,y_i) \}_{i=1}^n$, where the underlying data generating process is defined by a latent function $f(\bmx)$ with a GP prior
\[
    f \sim \mathcal{GP}(0, k(\cdot, \cdot))
\]
and $y | \bm{x} \sim \Phi (f(\bm{x}))$, where $\Phi (\cdot)$ is the cumulative Gaussian function; see \cite{RasmussenW06}.

For inference, we calculate the posterior distribution $p(\bm{y}\given X_*, \bmtheta)$ given by
$$
p(\bm{y} \given X_*, \bmtheta) = \int p(y \given \bm{f}_*) p(\bmf_*\given X_*,\bmtheta)d\bm{f}_* \,.
$$
Assuming that $\bmf_*\sim \mathcal{N}(\mu, \sigma^2)$ for some $\mu$ and $\sigma$ we get \cite[Section 3.9]{RasmussenW06}:
$$
p(y_* \given \bm{x}_*, \bmtheta) = \Phi(\alpha) \text{ with } \alpha=\frac{\mu}{\sqrt{1+\sigma^2}},
$$
where $\mu$ and $\sigma^2$ can be found from the predictive distribution in Equation 8 (in the main paper).
% \hide{
% For the calculation of $p(\bm{f}_*\given X_*,\bmetheta)$, we have that:
% \begin{equation*}
% \left [ 
% \begin{matrix}
% \bm{r} \\
% \bm{f}_*
% \end{matrix}
% \right ] \sim \mathcal{N} \left( \bm{0}, \left [ 
% \begin{matrix}
% K_{II} \quad K_{I*} \\
% K_{*I} \quad K_{**}
% \end{matrix}
% \right ]\right ),
% \end{equation*}
% which gives the conditional distribution

% \begin{equation}
% \label{equ:pred}    
% \bm{f}_* \given X_*, I_z, \bm{r} \sim \mathcal{N}\left( K_{*I}K_{II}^{-1}\bm{r}, K_{**}-K_{*I}K_{II}^{-1}K_{I*}\right ),
% \end{equation}
% where $\bmr_{\bmtheta_I} = (r_{\bmtheta_I}(\bmz_i))_{i=1:m}\T$
% }
% \subsubsection{Marginal likelihood}
% \thomas{We need to decide on how much of this should go to the supplementary material}

For learning the \FW parameters $\bmtheta$ we perform maximum likelihood estimation, but using the Laplace approximation to derive an approximation of the marginal likelihood. Firstly, the likelihood is defined as
\begin{equation}\label{eq:gp:binary} 
p(\bm{y} \given X, \bmtheta) = \int p(\bm{y} \given \bm{f}) p(\bm{f}\given X,\bmtheta)d\bm{f}.
\end{equation}Denoting by 
$$
\hat{\bm{f}}=\arg\max_{\bm{f}} \log p(\bmy | \bmf) + \log p(\bmf | X, \bmtheta)
$$ and employing the Laplace approximation in Equation~\ref{eq:gp:binary}, our goal is to maximize the following log-likelihood approximation
\begin{equation}\label{eq:gp:binary:s2}
\ell(\bmtheta;\calD) \approx \log p(\bmy | \hat{\bmf}) + \log p(\hat{\bmf} | X, \bmtheta) - \frac{1}{2}\log(|A|) + c,
\end{equation}where $c$ represent the accumulated constant terms and 
\begin{equation}\label{eq:gp:binary:hessian}
A={-\nabla\nabla (\log p(\bmy | \bmf) + \log p(\bmf | X, \bmtheta))}_{|\bm{f}=\hat{\bm{f}}}.
\end{equation}The form of $p(\hat{\bmf} | X, \bmtheta)$ is given in Equation 3 (in the main paper) and using the shorthand notation
\begin{eqnarray*}
\bma &=&K_{XZ}K_{ZZ}^{-1}\bm{r}_{\bmtheta_r} \\
K &=& K_{XX}-K_{XZ}K_{ZZ}^{-1}K_{ZX},
\end{eqnarray*}
the log-likelihood in Equation~\ref{eq:gp:binary:s2} can be expressed as 
\begin{eqnarray}\label{eq:gp:binary:s3}
\ell(\bmtheta;\calD) & \approx & \log p(\bmy | \hat{\bmf}) -\frac{1}{2}\log(|A|) - \frac{1}{2}\log(|K|) \nonumber \\
& - &\frac{1}{2}(\hat{\bmf} - \bma)^T K^{-1} (\hat{\bmf} - \bma)+ c.
\end{eqnarray}The term $p(\bmy | \bmf)$ in  Equation~\ref{eq:gp:binary:hessian} factorizes, hence $W = \nabla\nabla \log p(\bmy | \bmf)$ is diagonal \cite[Page 43]{RasmussenW06} with
\begin{eqnarray}
W_{ii} &=& \nabla \nabla \log p(y_i\given f_i) = \nabla \nabla \Phi (y_i\cdot f_i) \nonumber \\
&=& -\frac{\mathcal(f_i)^2}{\Phi(y_i\cdot f_i)^2} - \frac{y_i\cdot f_i \cdot \mathcal(f_i)}{\Phi(y_i\cdot f_i)}. \nonumber
\end{eqnarray}From Equation~\ref{eq:gp:binary:s3}, $\nabla\nabla\log p(\bmf | X, \bmtheta) = K^{-1}$, therefore $A=-W-K^{-1}$. We finally approximating the log-likelihood (omitting constant terms) as
\begin{eqnarray*}\label{eq:gp:binary:s4}
\ell(\bmtheta;\calD) &\approx& \log p(\bmy | \hat{\bmf}) -\frac{1}{2}\log(|-W-K^{-1}||K|)\\
&-& \frac{1}{2}(\hat{\bmf} - \bma)^T K^{-1} (\hat{\bmf} - \bma).
\end{eqnarray*}Lastly, we find $\hat{\bmf}$ using Newton's approximation for $T$ steps. Starting with $\hat{\bmf}^{0}=\bm{0}$, for $1 \le t \le T$, we have
$$
\hat{\bmf}^{t+1} = (K^{-1} + W)^{-1}(W\hat{\bmf}^{t} + \nabla\log p(\bmy|\hat{\bmf}^{t})),
$$where ${\nabla\log p(y_i|f_i)} = y_i\calN(f_i)/\Phi(y_i f_i)$.

\section*{Appendix: IMDB-Text Inducing Points}
We report the training set reviews close to the inducing points for the IMBD-Text experiment presented in Section 4.4 for one of the ten runs. Recall that we use 128 inducing points. In this case, we obtained 60 unique inducing points corresponding to 32 positive and 28 negative training set reviews. The inducing points are all different to each other but there are collisions when choosing the closests points in the training set. Note that the inducing points are balanced among the two classes. For the sake of space, for each review we report the initial and last 25 words.
\begin{enumerate}
\item \textbf{positive} - there have been several films about zorro some even made in europe e g alain delon this role has also been played by outstanding actors  ...  good performance of hadley as zorro he was quick smart used well his whip and sword and his voice was the best for any zorro
\item \textbf{negative} - i love the frequently masters of horror series horror fans live in a constant lack of projects like this and the similar project with gave  ...  up has to have a payoff that exceeds build up not the other way around storytelling math 101 br br end of spoilers big oops
\item \textbf{positive} - i rented the film i don't think it got a theatrical release here out expecting the worse the previews made the film look awful i  ...  be proclaimed 'the worst film ever i recommend this film for anybody interested in the show a flawed but innovative and interesting piece of film
\item \textbf{positive} - one of the best of the fred astaire and rogers films great music by irving berlin solid support from randolph scott harriet nelson lucille ball  ...  jazzy and it's a great song br br fun all the way although i got tired of we joined the navy after the third time
\item \textbf{positive} - forbidden planet rates as landmark in science fiction carefully staying within hard aspects of the genre science not fantasy nerds will love it while still  ...  the edge destroying its creator just as it did thousands of centuries earlier to the krell br br maybe the krell had teenage daughters too
\item \textbf{negative} - the bad news is it's still really dreadful i gave it a 2 because occasionally some of this slapstick parody actually seems funny br br  ...  this one and get this dvd back to the video store on time you'll really hate yourself if you have to pay a late fee
\item \textbf{positive} - nice character development in a pretty cool milieu being a male i'm probably not qualified to totally understand it but they do a nice job  ...  but within this world it needed to happen good acting all around with something positive taking place in the lives of some pretty good people
\item \textbf{negative} - mercifully there's no video of this wannabe western that a stay afloat vehicle for big frank at a time when his career was floundering the  ...  you up late and having a bout of insomnia but if you can sit through it you've more than most of my movie buff friends
\item \textbf{negative} - bela lugosi is an evil who sends brides poisoned on their wedding day steals the body in his fake ambulance hearse and takes it home  ...  in a discount store 2 for £1 which i think is a pretty accurate anyone paying more for this would be out of their mind
\item \textbf{positive} - its no surprise that busey later developed a in his this film is also a poor decision but one i enjoyed fully the first 5  ...  wet myself some of best work by far rent or buy it today my vote is a perfect 10 on the poo meter that is
\item \textbf{positive} - before sky i saw diane tender performance in this otherwise of a movie campers are invited to the camp of their youth and experience it  ...  comic acting turn by noted director sam raimi makes this a movie you can pull out again and again like looking up an old friend
\item \textbf{positive} - the line of course is from the lord's prayer thy will be done on earth as it is in heaven sweden especially its far north  ...  the ending is what you make of it i guess but it's not spoiling it to say daniel achieves what he set out to do
\item \textbf{positive} - my certainly is a fair looking woman this film is a lost gem a dead on satire mockumentary of the early 90's hip hop scene  ...  this regard i regard this movie like the 1000 islands of upstate new york it's a wonderful little secret you want to keep to yourself
\item \textbf{positive} - i rented this film from netflix for two reasons i was in the mood for what i thought would be a silly '50s sci fi  ...  also generally very good and the acting is much better than one might expect i was particularly impressed with reeves jeff corey and walter reed
\item \textbf{negative} - brilliant book with wonderful characterizations and insights into human nature particularly the nature of addiction which still resonate strongly today br br as for the  ...  normally excellent but inappropriately cast actors all in all a weak adaptation your three hours would be better spent reading or re reading the book
\item \textbf{negative} - it's boggles the mind how this movie was nominated for seven oscars and won one not because it's abysmal or because given the collective credentials  ...  director hungry to be recognized it could've been morphed to something better but what's left looks like a film nobody was really interested in making
\item \textbf{positive} - a comedy of funny proportions from the guys that brought you south park and most of the guys from this movie has utterly disgusting and  ...  turn the sport sour and its up ta coop ta fix it and along the way you will laugh alot that's all there is enjoy
\item \textbf{negative} - there was a bugs bunny cartoon titled baby buggy bunny that was exactly this plot baby faced robbed a bank and the money in the  ...  the bugs bunny dvd it's was much more original the first time 1954 plus you'll get a lot more classic bugs bunny cartoons to boot
\item \textbf{negative} - first off i really loved henry fool which puts me in a very small pool of movie goers parker posey is one of best actresses  ...  ride i'd be happy to spoil this movie for you but it's been done it's rotten the fool franchise is dead long live henry fool
\item \textbf{positive} - i imagine victorian literature slowly sinking into the of the increasingly distant past pulled down by the weight of its under skirts along comes television  ...  coarse have been made to modern tastes and without having felt preached to another bbc classic highly recommended this is how romantic literature should be
\item \textbf{negative} - i knew this movie wasn't going to be amazing but i thought i would give it a chance i am a fan of luke wilson  ...  the movie without people getting annoyed the movie had its moments but i'm glad i didn't spend 9 50 to see it in the theater
\item \textbf{negative} - i have read several good reviews that have defended and the various aspects of this film one thing i see over and over is annoyance  ...  of good and terrible acting i would recommend it for a cheap thrill but hardly a diamond in the rough that is micro budget horror
\item \textbf{negative} - this movie is like the thousand cat and mouse movies that preceded it the following may look like a spoiler but it really just describes  ...  exiting the theater from a hollywood movie and if you have ever felt that way too heed my warning stay miles away from this movie
\item \textbf{negative} - i wonder who how and more importantly why the decision to call richard attenborough to direct the most singular sensation to hit broadway in many  ...  well michael douglas was in it true i forgot i'm absolutely wrong and you are absolutely right nothing like a richard attenborough michael douglas musical
\item \textbf{positive} - this movie will go down down in history as one of the greats right along side of citizen kane casablanca and on the waterfront someone  ...  do yourself and your family a favor and buy it immediately i'm still holding out hope for a special edition dvd one of these days
\item \textbf{negative} - this is high grade cheese fare of b movie kung fu flicks bruce wannabe lee is played by bruce li i think of course let's  ...  flashback for a scene just shown 3 minutes ago they must've thought that only one with attention disorder could fully understand this film br br
\item \textbf{negative} - the only previous gordon film i had watched was the kiddie adventure the magic sword 1962 though i followed this soon after with empire of  ...  them then again this particular version is further sunk by the tacked on electronic score – which is wholly inappropriate and cheesy in the extreme
\item \textbf{positive} - hilarious evocative confusing brilliant film reminds me of or holy mountain lots of strange characters about and looking for what is it i laughed almost  ...  watch on screen or at his big slide show smart funny quirky and outrageously hot make more films write more books keep the nightmare alive
\item \textbf{positive} - the villian in this movie is one mean sob and he seems to enjoy what he is doing that is what i guess makes him  ...  guess you can make up your own mind about the true ending i'm left feeling that only one character should have survived at the end
\item \textbf{negative} - okay what the hell kind of trash have i been watching now the mountain has got to be one of the most incoherent and insane  ...  good heroine this is the type of european horror film that could have been legendary if only someone had bothered to write a structured screenplay
\item \textbf{positive} - i found this movie to be very good in all areas the acting was brilliant from all characters especially ms stone and character just gets  ...  audience which was misled by some faulty terrible reviews about the movie before it even started you won't regret it if you go see it
\item \textbf{negative} - a lot of horror fans seem to love scarecrows so i won't be very popular in saying that i found it to be rather boring  ...  involving killer scarecrows to my knowledge apart from dark night of the scarecrow which is much better i would recommend that over scarecrows any day
\item \textbf{negative} - david mamet's film debut has been hailed by many as a real thinking man's movie a movie that makes you question everybody and everything i  ...  unfulfilled and if you like me predicted ahead of time that margaret was going to be conned you will find this revelation just as unsatisfying
\item \textbf{positive} - while this was a better movie than 101 dalmations live action not animated version i think it still fell a little short of what disney  ...  as so many disney films are here's to hoping the third will be even better still because you know they probably want to make one
\item \textbf{positive} - i think i read this someplace joe johnston director of the film and also one of the guys who founded industrial light and magic for  ...  first homer jr did not like the idea but he warmed up to it after the movie poster paperback novel came out and took off
\item \textbf{positive} - stephen king movies are a funny thing with me i either really love them or i loathe them some of the productions such as desperation  ...  very watchable and enjoyable adaption br br for uk readers this production has most recently been shown on sci fi and sky thriller horror channels
\item \textbf{positive} - if you're researching ufo facts then this video is very important the of the video is the comments made by buzz he is without a  ...  in details should not detour your from viewing this video if nothing else it is interesting and i recommend you watch with an open mind
\item \textbf{positive} - rupert friend gives a performance as prince albert that lifts the young victoria to unexpected levels he is superb as we know queen victoria fell  ...  believe for a minute she was victoria no real sense of period it may no have been her fault but her prince deserved the crown
\item \textbf{positive} - seeing moonstruck after so many years is a reminder of how sweet and funny this film was when it first appeared who knew that cher  ...  used to be at its best entertainment with no social significance whatsoever if they'd only lost that's along the way it would have been perfect
\item \textbf{negative} - i first learned of the wendigo many years ago in one of alvin scary stories books according to that story the wendigo after calling your  ...  to count br br anyway avoid it patricia clarkson and erik per sullivan dewey on malcolm in the middle have done far better than this
\item \textbf{positive} - well maybe not immediately before the rodney king riots but even a few months before was timely enough my parents said that they saw it  ...  but either way grand canyon is a great movie it kevin kline as my favorite actor also starring mary mary louise parker and alfre woodard
\item \textbf{positive} - finally a movie where the audience is kept guessing until the end what will happen well we all kind of know that the lives of  ...  his drug and sex addictions and a father who finally discovers exactly what happened the day of the robbery this movie will get you thinking
\item \textbf{negative} - wow i don't even really remember that much about this movie except that it stunk br br the plot's basically a girl's parents neglect her  ...  you do see it don't expect much 1 out of 10 br br seriously if you want a pokemon movie rent pokemon the first movie
\item \textbf{positive} - the group of people are travelling to in an awful bus led by a drunk conductor and his dumb son who likes to drive with  ...  bigger than him in the end the movie takes one turn and the trip becomes nothing but a swan s song of a dying country
\item \textbf{negative} - yes this movie is a real thief it stole some shiny oscars from just because politicians wanted another war hero movie to boost the acceptance  ...  if we consider this title a reasonable piece of the u s wars are cool genre you surely have much better movies to choose from
\item \textbf{positive} - i was going through a list of oscar winners and was surprised to see that this film beat butch cassidy and the sundance kid for  ...  by hoffman to take this role otherwise he may have been typecast after the graduate anyway this considered an all time great for a reason
\item \textbf{positive} - inspirational tales about triumph of the human spirit are usually big turn offs for me the most surprising thing about men of honor is how  ...  doesn't disappoint he creates a darkly funny portrait director george jr set out to make an old style flick and comes up with a winner
\item \textbf{negative} - this movie was god awful from conception to execution the us needs to set up a star wars site in this remote country this is  ...  gymnast star in real life i would probably kick him in the face after a double with 2 1 2 twists in the layout position
\item \textbf{positive} - i wouldn't call we're back a story simply a kiddie version of jurassic park i found it more interesting than that like the former it  ...  kind i would actually say that john goodman doing voice here is sort of a precursor to his voice work in monsters inc worth seeing
\item \textbf{negative} - chinese ghost story iii is a totally superfluous sequel to two excellent fantasy films the film delivers the spell casting special effects that one can  ...  a little extra money out of a successful formula they won't be able to do it again the cash cow is now dead as a
\item \textbf{negative} - the direction had clearly stated that this film's idea and plot is totally original however as to those who have read comic we can clearly  ...  watching this thus making this movie getting what it shouldn't have it has became one of the best budget films in china for this year
\item \textbf{negative} - let me start by saying that i understand that invasion of the star creatures was meant to be a parody of the sci fi films  ...  the double feature dvd with invasion of the bee girls that movie is academy award winning stuff in comparison with invasion of the star creatures
\item \textbf{positive} - knowing when to end a movie is just as important as casting directing and acting and it's nice to see when a director script get  ...  a mansion br br this is a great independent production and one that wastes little time getting going and it won't waste your time either
\item \textbf{negative} - italian born has inherited from her deceased lover karl an ultra modern and isolated house in the middle of the woods it's winter and she  ...  around for it i think i'll give it the benefit of the doubt as it's definitely not what i was expecting from this indie film
\item \textbf{negative} - is a horror comedy that doesn't really have enough horror or comedy to qualify as one or the other it has one scene that is  ...  the movie that is weaker in general plot and spine because of production values that just shows you how uninteresting i found the look of
\item \textbf{positive} - there were a lot of 50's sci fi movies they were big draws for the drive in theaters a lot of them were crappy even  ...  worried about the invisible monster forbidden planet is a movie a sci fi fan can watch several times and find something new with each viewing
\item \textbf{negative} - curiously it is rene eyes and mouth not buddy the that emerge as the focal point of buddy a jim henson pictures production through francis  ...  thompson needs a good pick up shot she gives rene another extreme close up i wonder what the lipstick budget was on this picture from
\item \textbf{positive} - now this is what i'd call a good horror with occult supernatural undertones this nice low budget french movie caught my attention from the very  ...  very confusing towards the end but redeems itself by the time it's over br br i thought his was a very good movie 8 10
\item \textbf{negative} - and ethel buffs too will love her loud vocals as the wicked witch but this cartoon sequel to the wizard of oz is bereft of  ...  sure baby boomers will get a charge from it since it has been out of for so long as a curiosity item just fair from
\item \textbf{negative} - this is another one of those vs insects features a theme that was popular in the late 70's only you can't really call it horror  ...  after having seen ants lacking suspense action thrills shocks and creepiness the only thing you'll be left with after seeing ants is an annoying itch
\end{enumerate}

\bibliographystyle{named}

\end{document}